\newcommand{\paren}[1]{\left( #1 \right)}
\newcommand{\norm}[1]{\left\lVert#1\right\rVert}
\newtheorem{prop}{Proposition}
\newtheorem{definition}{Definition}
\begin{document}

\title{CoRe: Coherency Regularization for Hierarchical Time Series}

\author{Rares Cristian}
\email{raresc@mit.edu}
\affiliation{%
  \institution{MIT}
  \country{U.S.}
}
\author{Pavithra Harsha}
\email{pharsha@us.ibm.com}
\affiliation{%
  \institution{IBM Research}
  \city{Yorktown Heights}
  \state{New York}
   \country{U.S.}
}
\author{Georgia Perakis}
\email{georgiap@mit.edu}
\affiliation{%
  \institution{MIT}
  \city{Cambridge}
  \state{Massachusetts}
   \country{U.S.}
}
\author{Brian Quanz}
\email{blquanz@us.ibm.com}
\affiliation{%
  \institution{IBM Research}
  \city{Yorktown Heights}
  \state{New York}
   \country{U.S.}
}

\renewcommand{\shortauthors}{Cristian et al.}

\begin{abstract}
Hierarchical time series forecasting presents unique challenges, particularly when dealing with noisy data that may not perfectly adhere to aggregation constraints. This paper introduces a novel approach to soft coherency in hierarchical time series forecasting using neural networks. We present a network coherency regularization method, which we denote as CoRe (Coherency Regularization), a technique that trains neural networks to produce forecasts that are inherently coherent across hierarchies, without strictly enforcing aggregation constraints. Our method offers several key advantages. (1) It provides theoretical guarantees on the coherency of forecasts, even for out-of-sample data. (2) It is adaptable to scenarios where data may contain errors or missing values, making it more robust than strict coherency methods. (3) It can be easily integrated into existing neural network architectures for time series forecasting. We demonstrate the effectiveness of our approach on multiple benchmark datasets, comparing it against state-of-the-art methods in both coherent and noisy data scenarios. Additionally, our method can be used within existing generative probabilistic forecasting frameworks to generate coherent probabilistic forecasts. Our results show improved generalization and forecast accuracy, particularly in the presence of data inconsistencies. On a variety of datasets, including both strictly hierarchically coherent and noisy data, our training method has either equal or better accuracy at all levels of the hierarchy while being strictly more coherent out-of-sample than existing soft-coherency methods.
\end{abstract}

\begin{CCSXML}
<ccs2012>
<concept>
<concept_id>10010147.10010257.10010293.10010294</concept_id>
<concept_desc>Computing methodologies~Neural networks</concept_desc>
<concept_significance>500</concept_significance>
</concept>
</ccs2012>
\end{CCSXML}

\ccsdesc[500]{Computing methodologies~Neural networks}

\keywords{Hierarchical time series forecasting, neural networks, regularization}


\maketitle

\section{Introduction}

In many practical multivariate time series forecasting applications, series have a natural hierarchical nature where upper levels of the hierarchy are aggregates of lower levels. See for instance \cite{hyndman2018forecasting} (chapter 10) for an introduction to hierarchical forecasting.  For example, in many cases, time series can be aggregated by space, using features such as geographic region,  sales of retail products across stores \cite{petropoulos2022forecasting},  employment levels across regions \cite{cuturi2011fast}, traffic patterns \cite{abs2020labourforce} and more. Figure \ref{fig:hierarchy-example} shows a toy example of such a hierarchy. Sometimes multiple-hierarchies may also co-exist, such as product category, business, as well as location hierarchies. 

Forecasts at all levels are valuable and, moreover, understanding patterns at higher levels can help in improving prediction at lower levels. Vice versa, lower level information can also help make better predictions at higher levels, allowing one to factor in more local information. Each level of the hierarchy often contains data that behaves quite differently. For instance, lower levels are often sparse or noisy, while higher levels have more stable behavior over time. As such, recent methods like \cite{han2021simultaneously},  \cite{kamarthi2022profhit}, \cite{rangapuram2021end}, \cite{tsiourvaslearning} have focused on creating \emph{coherent} forecasts that predict all levels simultaneously while also satisfying aggregation constraints of the hierarchy. 

However, while many datasets may have a hierarchical structure, they may not  exactly satisfy the aggregation constraints. These deviations from exact aggregation can result from measurement or reporting errors, for example. Therefore, strictly enforcing coherency in forecasts could negatively affect accuracy in these cases. As an example, this problem has been addressed by a recent paper \cite{kamarthi2022profhit} which instead enforces a \emph{soft} coherency of the model forecasts. The authors propose a soft distributional consistency method which provides distributional forecasts of each series. They enforce soft coherency by penalizing the network on its training data by how much its forecasts deviate from exact coherency. In this way, they can trade off between accuracy and coherency. However, such an approach does not guarantee any coherency properties out-of-sample. In this paper we present the following contributions.

\begin{enumerate}
    \item We present a soft coherency method (CoRe) based on regularizing the layers of the neural network itself. We first introduce  \emph{network coherency regularization} to train any general class of neural networks so that they universally satisfy (soft) coherence for any input. The framework is described in section \ref{sec:coherency} and illustrated in figure \ref{fig:arch}.
    \item We prove that the network coherency regularization bounds how closely the output of the network satisfies coherency. We prove this with high probability for any input.
    \item We discuss how our approach can be integrated with existing distributional forecasting methods to ensure (softly) coherent distributions. Specifically, ensure that each sample of the distribution is nearly coherent. For instance, our approach can impose soft coherency on samples  generated by Variational Autoencoder (VAE) methods for distributional forecasting. 
    \item Computationally, we test on a variety of hierarchical time series problems as well as data with noise which requires soft coherency. We compare against state of the art models like \cite{rangapuram2021end}, \cite{kamarthi2022profhit} and show improved generalization out-of-sample both for point and distributional forecasts. 
\end{enumerate}


\noindent A key contribution of our method is in ensuring that each sample of the forecasted distribution is (nearly) coherent. This is in contrast with a soft coherency method like \cite{kamarthi2022profhit} which predicts a distribution (like a Gaussian distribution) independently for each series. This limits the coherency of their method since sampling from each distribution independently will not produce a highly coherent sample across all series. We discuss this further in Section \ref{sec:dist-forecast}.

Finally, an additional benefit of our approach is that we guarantee soft coherency on any input data, including out-of-sample and even out-of-distribution, since coherence is an inherent property of the final model learned. This is unlike other soft coherency methods which only guarantee coherence on training data, our method can ensure soft coherency on data not observed during training (our-of-sample) or coming from a changed or different distribution altogether (out-of-distribution). Therefore, our approach could be applied to out-of-distribution generalization such as \cite{liu2024time} in order to additionally ensure soft coherency.


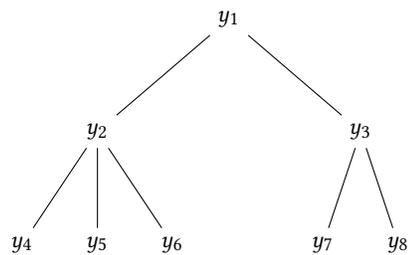
\begin{figure}[t]
    \centering
    \begin{tikzpicture}[level 1/.style={sibling distance=3.5cm},
      level 2/.style={sibling distance=1cm},
      level distance=1.5cm]
      \node {$y_1$}
        child {node {$y_2$}
          child {node {$y_4$}}
          child {node {$y_5$}}
          child {node {$y_6$}}
        }
        child {node {$y_3$}
          child {node {$y_7$}}
          child {node {$y_8$}}
        };
    \end{tikzpicture}
    \caption{An example time series hierarchy with three levels and eight series. For ex., the series $y_1$ at the top level is an aggregation of the lower-level series $y_2, y_3$.}
    \label{fig:hierarchy-example}
\end{figure}

\subsection{Related literature}

\paragraph{Classical Time Series Forecasting}

Traditional time series forecasting models, predating deep learning models, have wide-spread applications and continue to be used in industries today. The forecasting models include exponential smoothing \cite{mckenzie1984general}, Holt-Winters \cite{chatfield1978holt}, and Autoregressive Integrated Moving Average (ARIMA) \cite{box1968some}. These models can produce accurate predictions of future data using only a few key statistical features of the training data. However, neural network models (to name a few, \cite{lim2021temporal}, \cite{zhou2021informer}, \cite{wu2021autoformer}) can make significant improvements over these classical models, especially in large datasets, by capturing more complex patterns in the data.  Recurrent neural networks \cite{grossberg2013recurrent} effectively model time series, and sequential data in general, by having sequential states based on previous timesteps. 
Recently, transformers \cite{vaswani2017attention} and their variants as well as large language models like ChatGPT \cite{wu2023brief} have also been applied to time series forecasting \citep{wen2022transformers,jin2023time,ekambaram2024ttms}, improving performance with self-attention. 

\paragraph{Hierarchical Time Series Forecasting}

Early work in hierarchical time series forecasting involved \textit{top-down} or \textit{bottom-up} approaches, where models would predict the highest level series and disaggregate to retrieve lower level (top-down) predictions \cite{gross1990disaggregation} or predict the lowest levels series and aggregate (bottom-up) \cite{orcutt1968aggregation}. By only predicting on a subset of the hierarchy, these methods didn't fully utilize all information present in the hierarchical data. \citet{hyndman2011optimal} introduced a model that predicts all levels of the hierarchy independently and use regression to \textit{reconcile} those forecasts, making them coherent with the hierarchy. This outperformed the top-down and bottom-up approaches. One disadvantage with this predict-then-reconcile approach, though, is that information is lost in the separation of the prediction and reconciliation steps. Other approaches such as by \citet{van2015game} use game theory to compute reconciliations for forecasts, while MINT \cite{wickramasuriya2024probabilistic}  solves an optimization problem based on the coherency constraints to produce reconciled forecasts.

\citet{han2021simultaneously} use simultaneous forecasting and reconciliation, adding the reconciliation of all series at once into the training step of the models by regularizing a neural network with the coherency of quantiles. \citet{rangapuram2021end}, \citet{tsiourvaslearning} extend this by making the prediction and reconciliation process end-to-end differentiable, producing a single neural network that can predict and reconcile simultaneously. To make the reconciliation process differentiable, they project predictions into coherent space. Since their projection can be computed using a single matrix multiplication, the authors show it is differentiable and can be incorporated into a neural network. They add projection on top of DeepVAR \cite{salinas2020deepar}, which uses an auto-regressive recurrent network model to predict probabilistic forecasts. Thus, the final process is to predict a distributional mean and variance from DeepVAR, sample from that prediction, then project the sampled outputs into coherent space. 

PROFHiT \cite{kamarthi2022profhit} also uses probabilistic prediction and end-to-end differentiable reconciliation but also introduces the concept of a soft distributional consistency regularization, which is closer to our network coherency regularization. They define soft distributional consistency regularization as the distance between each prediction at the given time-step and its given hierarchy. There are a few differences between their work and ours. 

Our approach can be used to generate a non-parametric distributional forecast (such as generate samples from a VAE-based approach) while their method forecasts mean and variance of gaussian distribution for each series. However, their approach makes it more difficult to properly model coherency --- generating a sample from each distribution for each series will not result in a coherent sample overall since each Gaussian distribution for each series is independent. Our approach on the other hand ensures that each generated sample is (softly) coherent.  In addition, our approach also ensure soft coherence on any possible input, including on out-of-sample and out-of-distribution data. This is because we enforce coherency as an inherent property of the neural network model itself. On the other hand, \cite{kamarthi2022profhit} only applies a coherency loss to a network's output on training data, and so can have more difficulty generalizing out of sample as a result.



\section{Coherency framework}
\label{sec:coherency}

We first formally define coherency. Next, we introduce our method to train models to satisfy near-coherency by construction. We are given a set of time series $y_1, \dots, y_m$, with each $y_i$ consisting of $S$ time points and observations $y_i^1, \dots, y_i^S$. Similarly, denote a forecast for series $i$ as $\hat{y}_i$. A forecast is \textit{coherent} if it obeys the aggregation constraints of the hierarchy.  For example, a forecast of the hierarchy shown in Figure \ref{fig:hierarchy-example} would be coherent if $\hat{y}_2 = \hat{y}_4 + \hat{y}_5 + \hat{y}_6$, $\hat{y}_3 = \hat{y}_7 + \hat{y}_8$, and $\hat{y}_1 = \hat{y}_4 + \hat{y}_5 + \hat{y}_6  + \hat{y}_7 + \hat{y}_8$. In this paper, we encode coherency with respect to the bottom level of the hierarchy, so for $\hat{y}_1$ we aggregate the lowest levels of the series, instead of directly using $\hat{y}_2$ and $\hat{y}_3$. 

We can encode the hierarchy of a multivariate time series in an aggregation matrix, which we call $A$. For a pair of series $i,j$, we set $A_{i, j} = 1$ if $y_j$ is a base series (at the lowest level, or leaf of the tree) that is part of the aggregate for $y_i$. For our example from Figure \ref{fig:hierarchy-example}, we would set 
\begin{equation*}
    A = \begin{bmatrix}
    0 & 0 & 0 & 1 & 1 & 1 & 1 & 1 \\ 
    0 & 0 & 0 & 1 & 1 & 1 & 0 & 0 \\ 
    0 & 0 & 0 & 0 & 0 & 0 & 1 & 1 \\
    0 & 0 & 0 & 1 & 0 & 0 & 0 & 0 \\
    0 & 0 & 0 & 0 & 1 & 0 & 0 & 0 \\
    0 & 0 & 0 & 0 & 0 & 1 & 0 & 0 \\
    0 & 0 & 0 & 0 & 0 & 0 & 1 & 0 \\
    0 & 0 & 0 & 0 & 0 & 0 & 0 & 1 \\
    \end{bmatrix}
\end{equation*}
and the coherency, $c(\hat{y})$, is defined as 
\begin{equation}
\label{eq:data-coherency}
    c(\hat{y}) = \norm{\hat{y} - A\hat{y}}. 
\end{equation}
The term $A\hat{y}$ properly aggregates the lower-level predictions of $\hat{y}$ to the higher levels of the series. Of course, $A$ can aggregate the series in any other way as well, for example describing how one layer of the hierarchy aggregates to the next higher layer instead. 

The loss $c(\hat{y})$ essentially measures how close $\hat{y}$ is from an output which is truly coherent, namely $A\hat{y}$. Since the ground truth data obeys a known hierarchy (or nearly obeys if there are errors in the data for instance), an accurate forecast should be coherent. Ideally, using this additional known structure of the data should allow us to learn improved forecasts.

\subsection{Network coherency regularization}\label{sec:prop-model}

\begin{figure*}[h]
    \centering
    \includegraphics[scale=0.4]{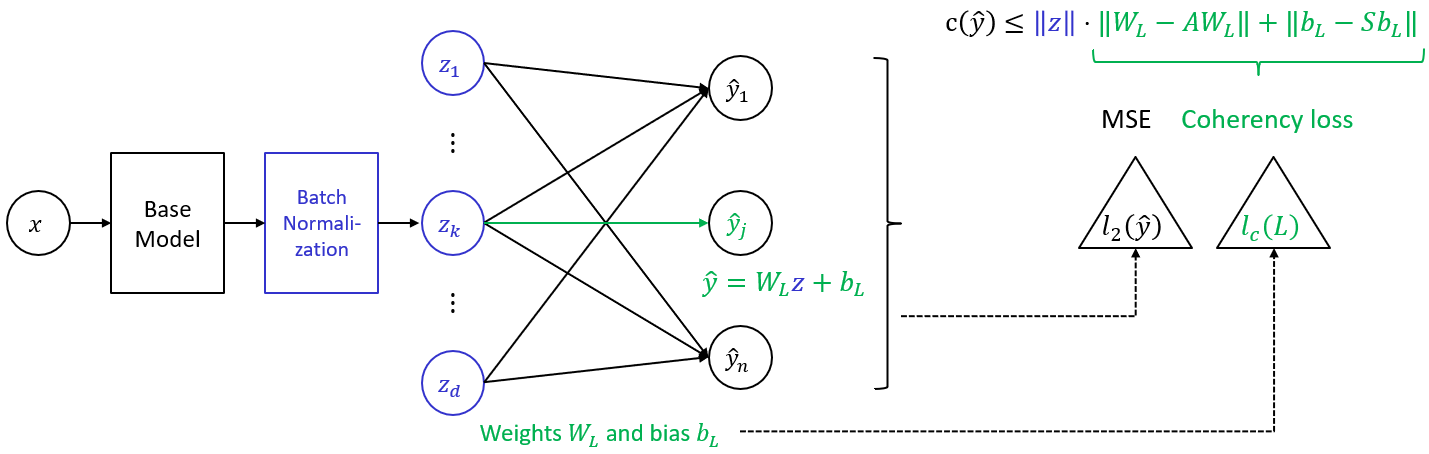}
    \caption{Network and loss architecture. Any neural network architecture followed by final batch normalization. We denote this output by $z$ which is then passed through a final linear layer denoted by $L$ with weights $W_L$ and bias $b_L$. The final output is denoted by $\hat{y}$. We have structural coherency loss ${l}_c(L)$ in addition to traditional mean-squared loss.}
    \label{fig:arch}
\end{figure*}

To control coherency, our goal is to control $c(\hat{y})$ for any forecast $\hat{y}$ made by the model. To do so, we represent the forecast as a function of the final layer of the network and define a regularization function in terms of this final layer instead. This allows us to ensure soft coherency universally as it is now a property of the network itself.

The final output of a network ultimately depends on its last layer, which we will assume is a fully-connected linear layer. This is not generally restrictive as such a final layer is also most common in practice. Let $L$ denote this last layer, and let $W_L$ be the weights and $b_L$ the bias of the layer. Let ${z}$ denote the output of the second-to-last layer, which is also the input to the last linear layer $L$. See figure \ref{fig:arch}. This also allows for flexibility in choosing any architecture to be plugged in for the core architecture. The same approach is applicable to any kind of neural net time series forecasting model, e.g., CNN, Transformer, RNN and so on.  In practice, the initial neural network could be any network that outputs a quantity that can be passed into the final linear layer, since the coherency computation only depends on this final linear layer $L$.

Recall from \eqref{eq:data-coherency} that the \textit{coherency} of the model will be $\begin{Vmatrix} \hat{y} - A \hat{y} \end{Vmatrix}$, the difference between our predictions for each level and our predictions for the aggregate for that level. We define our network coherency regularization as follows.
\begin{definition}[network coherency regularization]
\label{def:coherency-loss}
    Given a final linear layer $L$ with weights $W_L$ and bias $b_L$, we define the network coherency regularization $l_c(L)$ as 
    \begin{equation}
    \label{eq:c-loss}
    l_c(L) = \norm{W_L-AW_L} + \norm{b_L-Ab_L}.
    \end{equation}
\end{definition}
This comes naturally from the definition of coherency. In particular, the coherency of output $\hat{y}$ is defined by 
\begin{align}
    c(\hat{y}) =&\ \norm{\hat{y} - A\hat{y}} \label{eq1-coh}\\
        =&\ \norm{W_L z +b_L - A(W_L z +b_L)}\\
        \leq&\ \norm{z} \norm{W_L - AW_L} + \norm{b_L - Ab_L}.
\end{align}
\paragraph{Practical take-aways.}
The bound on coherency depends on $\norm{z}$, the norm of the output of the next-to-last layer. In practice, we apply a batch-normalization step to the output of the next-to-last layer in order to bound $\norm{z}$. See Proposition \ref{thm} for a theoretical justification. See Figure \ref{fig:arch} for an illustration of the network architecture.

So, minimizing $l_c(L)$ also provides an upper bound on coherency loss $c(\hat{y})$. Our proposed regularization function \eqref{eq:c-loss} is independent of the data, it measures an inherent and structural property of the network itself. As an example, note that when $l_c(L) = 0$, the network is guaranteed to be completely coherent for all inputs. In general, it can be used to describe the coherency of its predictions on any input, including out-of-sample data. Again, soft coherency can be especially useful for cases where the data itself isn't coherent. 

\begin{prop}
\label{thm}
    Given a batch-normalization layer is applied directly before the final layer $ L $, then the network coherency regularization \eqref{def:coherency-loss} bounds the coherency $c(\hat{y})$ (defined in \eqref{eq:data-coherency}) of any output prediction $\hat{y}$ as follows. For any $\delta \geq 1$, the coherency of \emph{any} prediction is bounded as
    \begin{equation}
        c(\hat{y}) \leq \delta \cdot l_c(L)
    \end{equation}
    with probability $        \mathbb{P}\biggl(c(\hat{y}) \leq \delta \cdot l_c(L)\biggr) \geq 1 - 4\exp\paren{-\frac{\delta^2}{8d^2}}$
    where $d$ is the dimension of the second-to-last output ${z}$.
\end{prop}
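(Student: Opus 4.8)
The plan is to split the statement into a deterministic inequality already established in the excerpt and a probabilistic bound on $\norm{z}$ furnished by the batch-normalization layer. The deterministic part is exactly the chain of inequalities beginning at \eqref{eq1-coh}, which holds for every input and gives $c(\hat{y}) \leq \norm{z}\,\norm{W_L - AW_L} + \norm{b_L - Ab_L}$. The crucial point is that because $\delta \geq 1$, on the event $\{\norm{z} \leq \delta\}$ I can bound the first summand by $\delta\norm{W_L - AW_L}$ and the bias summand by $\delta\norm{b_L - Ab_L}$, so that $c(\hat{y}) \leq \delta\paren{\norm{W_L - AW_L} + \norm{b_L - Ab_L}} = \delta\, l_c(L)$. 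Thus the proposition reduces entirely to proving the tail bound
\begin{equation*}
    \P\paren{\norm{z} > \delta} \leq 4\exp\paren{-\frac{\delta^2}{8d^2}}.
\end{equation*}

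For the tail bound I would first exploit the batch-normalization layer sitting immediately before $L$: it forces each coordinate $z_i$ of $z$ to have mean zero and unit variance. I would make this quantitative by positing that each $z_i$ is sub-Gaussian with a variance proxy fixed by the normalization scale, so that a single-coordinate tail estimate $\P(|z_i| > t) \leq 2\exp(-t^2/c)$ holds for the corresponding constant $c$. The second ingredient is a passage from the Euclidean norm to coordinate-wise events via the comparison $\norm{z} = \norm{z}_2 \leq \norm{z}_1 = \sum_{i=1}^d |z_i|$. By the pigeonhole principle, if every coordinate obeyed $|z_i| \leq \delta/d$ then $\norm{z}_1 \leq \delta$, giving the inclusion $\{\norm{z} > \delta\} \subseteq \{\exists\, i : |z_i| > \delta/d\}$. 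A union bound over the $d$ coordinates then yields $\P\paren{\norm{z} > \delta} \leq \sum_{i=1}^d \P\paren{|z_i| > \delta/d}$, and inserting the per-coordinate sub-Gaussian tail at threshold $\delta/d$ manufactures precisely the $\delta^2/d^2$ scaling inside the exponential; the constants $8$ and $4$ then fall out of the variance proxy $c$ and the aggregation of the $d$ coordinate tails.

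The main obstacle is this concentration step and, in particular, its constants. Batch normalization only certifies the first two moments of each $z_i$, whereas an exponential tail genuinely requires a distributional hypothesis (sub-Gaussianity, or an almost-sure bound on the post-normalization activations); stating that hypothesis cleanly is the conceptual crux. After that, the delicate part is purely bookkeeping: one must carry the variance-proxy constant through the $\delta/d$ threshold and through the union bound so that the exponent lands exactly at $8d^2$ and the leading factor collapses to $4$. By comparison, the reduction in the first step and the $\ell_1$ norm comparison are routine, so the entire difficulty is concentrated in controlling $\norm{z}$ and accounting for the constants.
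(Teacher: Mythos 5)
Your deterministic reduction is sound, and in fact tidier than the paper's own write-up: the paper's chain $c(\hat{y})\le\norm{z}\,l_c(L)$ silently needs $\norm{z}\ge 1$ to absorb the bias term, whereas your use of $\delta\ge 1$ on the event $\{\norm{z}\le\delta\}$ is the correct fix. The gap is in the tail bound. Your coordinate-wise route gives
\begin{equation*}
\P\paren{\norm{z}>\delta}\ \le\ \sum_{i=1}^d\P\paren{|z_i|>\delta/d}\ \le\ 2d\,\exp\paren{-\frac{\delta^2}{c\,d^2}},
\end{equation*}
and the prefactor $2d$ is not bookkeeping that can be made to ``collapse to $4$.'' Since batch normalization forces unit variance, any sub-Gaussian variance proxy for $z_i$ is at least $1$, so $c\ge 2$; then $2d\exp\paren{-\delta^2/(cd^2)}\le 4\exp\paren{-\delta^2/(8d^2)}$ forces $d\exp\paren{-\delta^2\paren{\tfrac{1}{cd^2}-\tfrac{1}{8d^2}}}\le 2$, i.e.\ roughly $\delta\gtrsim d\sqrt{\ln d}$. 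But the proposition claims the bound for \emph{all} $\delta\ge 1$, and it is already non-vacuous once $\delta\ge\sqrt{8\ln 4}\,d\approx 3.3\,d$. Hence for moderately large $d$ (roughly $d\ge 130$) there is a genuine range of $\delta$ where the statement has content but your bound does not imply it: a union bound over $d$ coordinates is intrinsically off by a factor of $d$ relative to the dimension-free constant $4$ being claimed.

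The paper avoids this by never going coordinate-wise: it applies a single vector-level concentration inequality to $\norm{z}$ (Ledoux--Talagrand, Lemma 3.1), namely $\P(\norm{z}\ge\delta)\le 4\exp\paren{-\delta^2/(8\,\E\norm{z}^2)}$, and then uses $\E\norm{z}^2=d$ from batch normalization. This yields the prefactor $4$ directly, and in fact the stronger exponent $\delta^2/(8d)$, of which the stated $\delta^2/(8d^2)$ is a weakening. If you insist on a coordinate-wise argument, splitting the threshold as $\delta/\sqrt{d}$ per coordinate (via $\norm{z}_2^2\le d\max_i z_i^2$) rather than $\delta/d$, and dismissing the regime $4\exp\paren{-\delta^2/(8d^2)}\ge 1$ as trivial, gets you much closer; but the clean route is the vector-level lemma. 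One point in your favor: your observation that mean zero and unit variance alone cannot yield exponential tails is correct, and it applies equally to the paper --- the cited lemma is a Gaussian (Banach-space) concentration result, so a distributional assumption on $z$ beyond what batch normalization guarantees is implicitly being made there as well.
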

\noindent The lower the coherency regularization term, the higher the certainty of coherency will be. As the second-to-last network layer has higher width, we need to decrease the network coherency regularization to reach the same coherency $c(\hat{y})$. Practically, this means we should weight the network coherency regularization more. 

\paragraph{Loss weighting} The network coherency regularization  enforces coherency, but not necessarily accuracy, on the network. As such, we combine our network coherency regularization with an accuracy-based metric, $\ell_2(\hat{y})$ (which could be mean-squared error or mean absolute error, for example). In this work we consider the simple weighting $w \cdot l_c(L) + \ell_2(\hat{y})$. The $w$ term is meant to describe the relative weight of the coherency regularization metric. If we set $w=1$, then the two losses are equally important. Tuning $w$ lets us trade off between accuracy and coherency. Soft coherency can lead to a better solution overall compared to restricting the learning search to fully coherent solutions. This is because when enforcing full coherency, the training can become stuck finding solutions that are coherent but becomes more difficult to focus on improving accuracy. Soft coherency may lead to a better optimization problem. We discuss this further in the experiment section.

\begin{proof}[Proof of Proposition \ref{thm}]
    As in \eqref{eq1-coh}, we can rewrite the output $\hat{y}$ of the network in terms of the output of the second to last layer ${z}$. Specifically, $\hat{y} = W_L{z} + b_L$. The coherency can be rewritten as 
    \begin{align}
        c(\hat{y}) =&\ \norm{\hat{y} - A\hat{y}} 
        = \norm{(W_L{z} + b_L)- A(W_L {z} + b_L)} \\ 
        \leq&\ \norm{{z}} \norm{W_L - AW_L} + \norm{b_L - Ab_L}         \leq \norm{{z}} l_c(L)
    \end{align}
    Due to batch-normalization, each term of ${z}$ has mean zero and unit variance. From \cite{ledoux2013probability} (see lemma 3.1), we have $\mathbb{P}(\norm{{z}} \geq \delta) \leq 4\exp\paren{-\delta^2/8\mathbb{E}\norm{{z}}^2} $. Moreover, $\mathbb{E}\norm{{z}}^2 = d$, the dimension of ${z}$.
\end{proof}

\subsection{Distributional forecasts}
\label{sec:dist}

This section describes a method of designing neural network architectures for making a distributional forecast for each time series while ensuring that each sample from this distribution satisfies (soft) coherency. The general approach from existing literature cannot be directly applied to our setting. To see why, consider existing methods from \cite{kamarthi2022profhit}, \cite{rangapuram2021end}. First, these methods predict the mean and variance of a Gaussian distribution (or other sufficient statistics if one uses a different distribution family). The projection method generates samples from this distribution and projects them onto the coherency constraints. The PROFHiT method instead defines a soft coherency metric on this distribution directly. However, our network coherency regularization function is defined on the final layer of the network and hence neither of these two approaches can be directly extended to ours. 

Here we will describe in short several existing methods that can be used to generate distributional forecasts which can be extended to use our network coherency regularization approach. In general, our approach can be used whenever the model is designed to directly generate samples from the distribution, and not by making parametric distributional forecasts (such as predicting mean and variance for Gaussian distributions).  

\paragraph{Variational Autoencoder (VAE) methods}. See for example work of \cite{nguyen2021temporal} for constructing distributional forecasts using VAEs. The model applied to our setting could look as follows. See Figure \ref{fig:vae_arch} for an illustration. (1) As input, we take features (which can be lag features from the time series in addition to any other available features). (2) We encode these features to a latent space embedding $\theta$ of lower dimension. (3) We add random normal noise to $\theta$  and finally (4) decode this noisy embedding vector to produce a final prediction. (5) Sampling multiple such predictions by using multiple noise values added to $\theta$ gives us the empirical forecast distribution. The network coherency regularization can be applied to the final layer of the decoding network. 
    
\paragraph{Dropout layers}. Dropout is used in many models in deep learning as a way to avoid over-fitting, however, \cite{gal2016dropout} also shows it also allows one to represent distributional forecasts. We can extend the idea to our setting as follows. Droupout can be applied to any number of layers of the of the model (not including the final layer). Multiple forward passes through the network using the same input will generate a distribution of samples (since at each forward pass, a random subset of nodes will be ``dropped out''). Next, we can enforce coherency by applying the network coherency regularization \eqref{def:coherency-loss} to the final layer of the model (which we intentionally do not apply dropout to).

\paragraph{Other sample-generating generative models for time series forecasting.} In general, our regularization method can be applied to any forecasting method which directly generates samples. As other examples, we can use generative adversarial methods \cite{vuletic2024fin}, normalizing flow for forecasting \cite{rasul2020multivariate} (also mapping random noise to output distribution predictions) or diffusion methods \cite{rasul2021autoregressive}.

\begin{figure}[H]
    \centering
\includegraphics[width=0.75\linewidth]{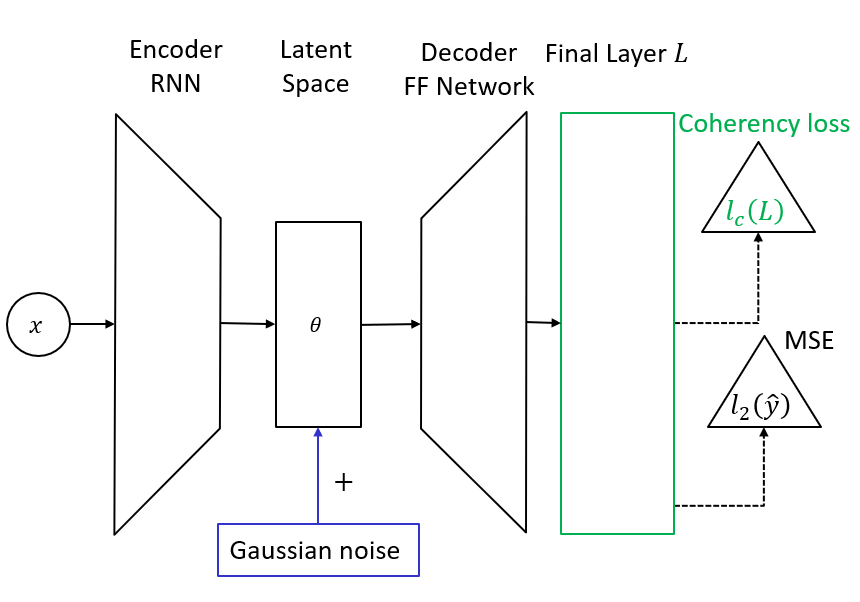}
    \caption{VAE-based distributional forecasting for network coherency regularization.}
    \label{fig:vae_arch}
\end{figure}

The advantage of constructing non-parametric distributions like we do is (1) the ability to ensure every sample from the distribution is (softly) coherent and (2) not making any constraining assumption on the underlying predictive distribution. If one assumes a parametric form (like Gaussian), it may likely not be accurate.  With this more flexible form for the distribution, we can more closely match the true data distribution. In contrast to our approach, other soft coherency methods like \cite{kamarthi2022profhit} cannot be extended to use these architectures and methods whose output is a single sample from the predicted distribution rather than the entire distribution. This is because the soft coherency loss (distributional consistency error from \cite{kamarthi2022profhit}) is defined on entire distributions such as Gaussian distributions, not on single sample outputs. It may be possible to extend \cite{kamarthi2022profhit} to this setting, although that is beyond the scope of this paper. 

\section{Computational results}

All code and data can be found in the anonymized github: \url{https://github.com/CoRe-Hierarchical-time-series/KDD-2025}.

\subsection{Datasets}

We test on three benchmark datasets, which are also used in \citet{rangapuram2021end} and \citet{kamarthi2022profhit}: Traffic \cite{cuturi2011fast}, Labor \cite{abs2020labourforce}, and Tourism (Large) \cite{kourentzes2019cross}. Table \ref{tab:datasets} shows the total number of series (including the entire hierarchy), hierarchy levels, and observations of each dataset. Finally, the prediction horizon is 1 for each dataset.

\paragraph{Noisy data} These datasets are perfectly coherent, so we will also generate noisy datasets based off of these to show the advantages of soft coherency in this setting. For each dataset, we construct a new non-coherent dataset by removing some time series at random. This is meant to simulate random missing series from the data.  To construct noisy data, we use the original datasets from Table \ref{tab:datasets} and randomly drop different time series. In particular, we construct 20 random datasets, each one constructed by dropping a different random subset of $20\%$ of the bottom-level time series. 

\begin{table}[t]
    \begin{center}
    \begin{tabular}{c c c c} \toprule
      Dataset & \# of series & \# of levels & \# of observations \\ [0.5ex] \midrule
     Traffic & 207 & 4 & 366 \\ 
     Labour & 57 & 4 & 514 \\ 
     Tourism & 555 & 21 & 228 \\ \bottomrule
    \end{tabular}
    \caption{Number of series, hierarchy levels, and observations for each dataset. 
    }
    \label{tab:datasets}
    \end{center} 
\end{table}

\subsection{Metrics}

We measure three metrics for each approach: (1) mean squared error (MSE) on test data at each level of the hierarchy, (2) weighted mean absolute error (WMAPE) by hierarchy level, and (3) coherency as defined by  \eqref{eq:data-coherency}. WMAPE is defined by
\begin{equation}
    \text{WMAPE} = \frac{\sum_{t=1}^T\sum_{n=1}^N |\hat{y}_n^t - y_n^t|}{\sum_{t=1}^T\sum_{n=1}^N y_n^t}
\end{equation}
where $\hat{y}_n^t$ are the predicted values for the $n^{th}$ series at time $t$ and $y_n^t$ is the corresponding ground truth. This is a commonly used metric for regression problems which intuitively measures relative error in forecasting weighted by  the value of each series.

For the distributional forecasts, we also measure the Cumulative Ranked Probability Score (CRPS) which is a widely used standard metric for the evaluation of probabilistic forecasts that measures both accuracy and calibration. This is defined as follows. Given a ground truth observation $y$ (a point value) and the predicted probability distribution $\hat{p}$, let $\hat{F}$ denote its cumulative distribution function. Then, 
\begin{equation}
    \text{CRPS} = \int_{-\infty}^{\infty} (\hat{F}(\hat{y}) - \mathbbm{1}(\hat{y} > y))^2 d\hat{y}.
\end{equation}
In practice, we evaluate CRPS by sampling from the predicted probability distribution and approximating the integral with these samples. 

\subsection{Models}
\label{sec:models}

We describe each model we use and compare against. Again, any possible network architecture can be used up to the final linear layer. For the experiments we make one uniform choice that works well for a variety of datasets, specifically an RNN architecture as the baseline. 

\begin{enumerate}[leftmargin=*]
    \item \emph{Baseline}. This is a simple RNN with a final output linear output layer to predict all time series simultaneously. This baseline approach uses only the mean-squared error loss. We only vary the coherency computations in the alternatives described below so that we can directly compare the training and results of our specific coherency learning method. Specific architecture choices for the RNN are described in the appendix \ref{app:model}.
    \item \emph{Network coherency regularization (CoRe)}. Our proposed approach will use the same base RNN model, and only alter the loss function to use our proposed coherency loss in \eqref{def:coherency-loss}. We choose the parameter $w$ for the tradeoff between network coherency regularization and MSE by validation on a holdout set.
    \item \emph{End-to-end projection}. We also compare against the end-to-end projection method of \citet{rangapuram2021end} by projecting $\hat{y}$ into coherent space and using the $\ell_2$ loss.  The projection model provides us with a hard coherency constraint model to compare against. This method is already state-of-the art, as can be seen through their experiments on the same datasets (as some of ours) comparing against existing methods. Hence, this is our primary strictly coherent benchmark. 
    \item \emph{PROFHiT} Finally, we compare against another soft coherency method from \cite{kamarthi2022profhit}. Similar to our approach, this method also has a hyperparameter for weighting between MSE and their coherency-based loss. We choose this parameter in the same way by validation on the same holdout set as for our approach. 
\end{enumerate}

\noindent The only hyperparameters we need to tune are the weights in the loss function between MSE and the coherency term (either our proposed regularization term or the proposed loss from \cite{kamarthi2022profhit}). For each dataset, we split into train, validation and test. Train consists of the first 80\% of the data (where data is ordered by time), validation consists of the next 10\% and test the final 10\%.  We train each method until we observe no improvement in validation score (of MSE) for at least 100 epochs. We then choose the checkpoint during training which had the best performance on validation data. Finally, we evaluate each model on the test data. Further, we choose the hyperparameter in weighting the loss between coherency by hold-out validation. We evaluate a range of weights ($1e-4, 1e-3, 1e-2, 1e-1$) and choose the best performance  based on the validation data.

\subsection{Results}

  

\begin{figure}[t]
  \vspace{0pt} 
  \begin{minipage}[t]{0.49\columnwidth}
    \centering
    \includegraphics[width=1\columnwidth]{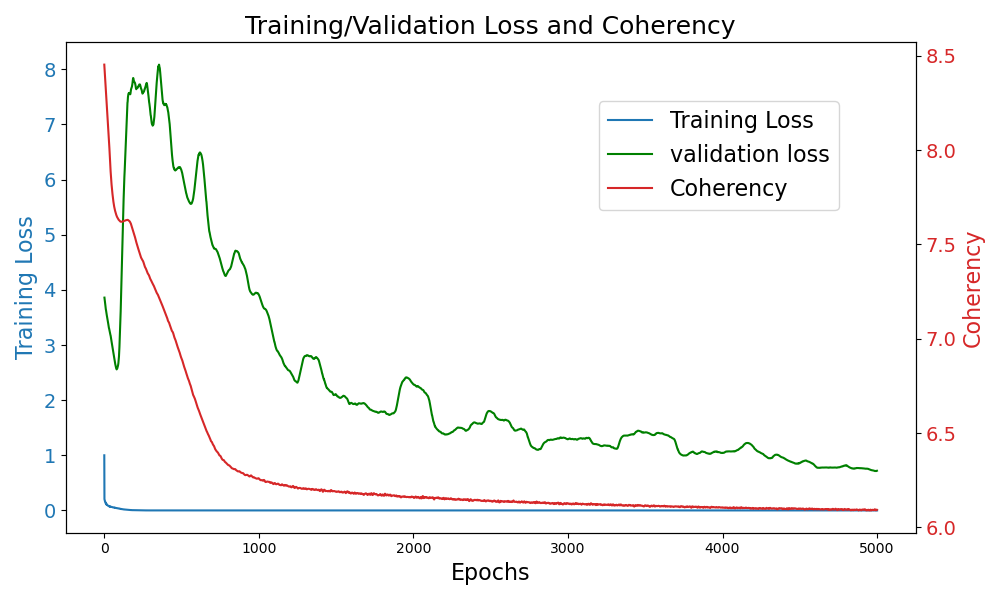}
    \captionof{figure}{Validation score follows trend of coherency metric rather than training score, indicating coherency is crucial for generalization.}
    \label{fig:validation-coherency}
  \end{minipage}
  \begin{minipage}[t]{0.49\columnwidth}
    \centering
    \includegraphics[width=0.9\columnwidth]{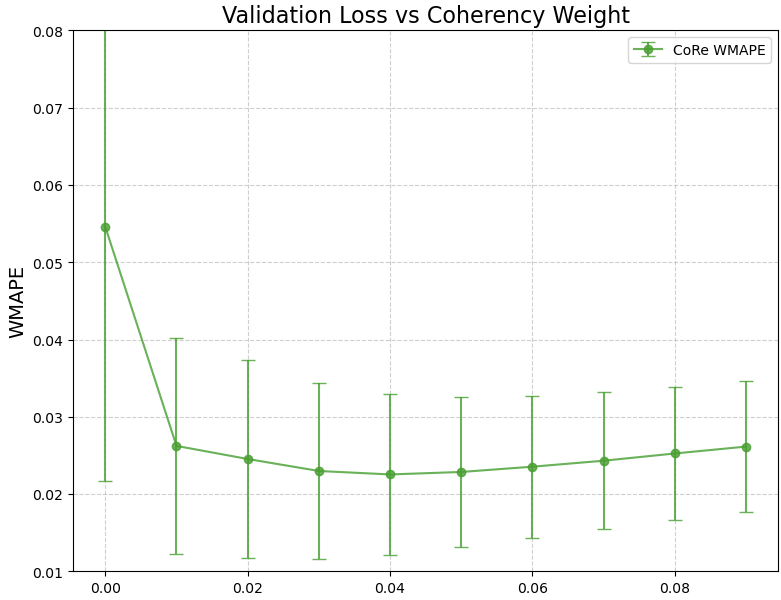}
    \captionof{figure}{Validation score as coherency weight increases.}
    \label{fig:validation-coherency-weight}
  \end{minipage}
\end{figure}

We split the results into two sections. First, point forecasts and second, distributional forecasts.  


\paragraph{Coherency improves generalization} First, we analyze the behavior of the network coherency regularization during training and effect on validation loss. In figure \ref{fig:validation-coherency} we plot the progression of training loss, validation loss, and the network coherency regularization term of the model trained on the labor dataset. We observe that training loss decreases rapidly to nearly zero, suggesting overfitting which is corroborated by a sharp increase in validation loss in early training epochs. However, network coherency regularization term decreases as training progresses. Correcting the model to output more coherent predictions improves the validation loss and generalization of the model. 

This is also evidenced when using the other coherency methods. Table \ref{tab:no-noise} shows both average coherency and average WMAPE across 10 training runs on each dataset. We observed the test loss improved for all coherency methods over a base model with no coherency.

\paragraph{Tradeoff in weighting network coherency regularization with MSE} Recall our training loss is a weighted combination of MSE and our proposed network coherency regularization: $w \cdot l_c(L) + \ell_2(\hat{y})$. As we increase the weight $w$ of the network coherency regularization, coherency improves, but also makes it more difficult to improve model accuracy $\ell_2(\hat{y})$. In Figure \ref{fig:validation-coherency-weight} we plot the validation loss for a variety of such weights. For each weight, we train 10 separate models and plot the average final validation loss. For consistency, we use the same starting seed/initialization for each choice of $w$. Applying no coherency whatsoever ($w=0$) provides the worst out-of-sample results. As expected, we observe that increasing $w$ from zero provides better generalization, but past a certain point (in this case, after $w=0.04$) generalization worsens. Too heavy of a focus on coherency during training can possible cause the model to become stuck prioritizing coherency over predictive performance.

\begin{table*}[t]
    \centering
\begin{tabular}{ c c c c c c}
\toprule
Model & \multicolumn{4}{c}{{Hierarchy level}} & Average MSE \\
 & 4 & 3 & 2 & 1 \\ 
\midrule
Base & 0.0053 / 15.8160 & 0.0226 / 0.5279 & 0.0616 / 0.4013 & \textbf{0.0997} / 0.2586 & 0.047 \\ 
CoRe (Ours) & 0.0167 / 19.1570 & \textbf{0.0147} / \textbf{0.3296} & 0.0311 / \textbf{0.2576} & 0.1102 / \textbf{0.2484} & \textbf{0.043} \\ 
PROFHiT & 0.0412 / 45.7371 & 0.0187 / 0.5084 & \textbf{0.0275} / 0.3544 & 0.1255 / 0.3712 & 0.053 \\ 
Projection & \textbf{0.0050} / \textbf{15.5177} & 0.0156 / 0.5069 & 0.0516 / 0.4515 & 0.1990 / 0.4432 & 0.068 \\ 
\bottomrule
\end{tabular}
    \caption{Noisy traffic data results; MSE / WMAPE (respectively) at each level of the hierarchy. Bottom hierarchy level corresponds to level 4 and top level aggregation corresponds to level 1.}
    \label{tab:traffic-res}
\end{table*}

\begin{table*}[t]
    \centering
\begin{tabular}{ c c c c c c}
\toprule
Model & \multicolumn{4}{c}{{Hierarchy level}} & Average MSE \\
 & 4 & 3 & 2 & 1 \\ 
\midrule
Base & 0.0053 / 2.5026 & 0.0059 / 0.9327 & 0.0076 / 0.5429 & 0.1263 / 0.3392 & 0.036 \\ 
CoRe (Ours) & \textbf{0.0010} / \textbf{1.0988} & \textbf{0.0016} / \textbf{0.5072} & \textbf{0.0042} / \textbf{0.4210} &\textbf{ 0.1118} / 0.3510 & \textbf{0.029} \\ 
PROFHiT & 0.0021 / 1.5435 & 0.0025 / 0.6193 & 0.0048 / 0.4332 & 0.1158 / \textbf{0.3310} & 0.031 \\ 
Projection & 0.0033 / 1.9587 & 0.0031 / 0.6910 & 0.0058 / 0.4941 & 0.1190 / 0.3408 & 0.033 \\ 
\bottomrule
\end{tabular}
    \caption{Noisy labor data results; MSE /  WMAPE (respectively) at each level of the hierarchy. Bottom hierarchy level corresponds to level 4 and top level aggregation corresponds to level 1.}
    \label{tab:labor-res}
\end{table*}



\begin{figure*}[h]
    \centering
    \includegraphics[width=0.9\linewidth]{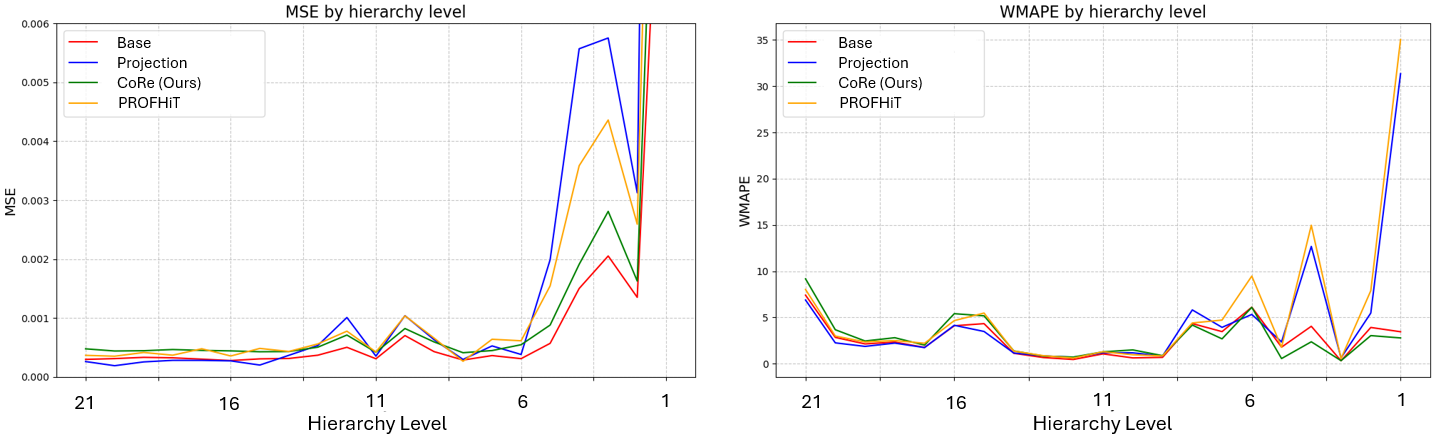}
    \caption{Noisy tourism data results. Average MSE (left) and WMAPE (right) at each level of the hierarchy across 20 noisy datasets.}
    \label{fig:tourism-res}
\end{figure*}

\begin{table}[H]
    \centering
    \begin{tabular}{c c c c c} \toprule
& Base & \makecell{CoRe (Ours)} & PROFHiT & Projection  \\ \midrule     
    Traffic &  0.037 / 0.039 & 0.007 / 0.038 & 0.034 / 0.06 & 0 / 0.037 \\
    Labor & 0.09 / 0.45 & 0.03 / 0.4 & 0.04 / 0.42 & 0 / 0.40    \\
    Tourism & 0.065 / 1.1 & 0.004 / 0.37 & 0.014 / 0.44 & 0 / 0.80 \\ \bottomrule        
    \end{tabular}
    \caption{Average (across 10 training runs) of coherency / WMAPE for each method on test data from each dataset.}
\label{tab:no-noise}
\end{table}

\paragraph{Results on fully-coherent data} Again, Table \ref{tab:no-noise} shows results for each model on each original dataset across 10 training runs. We primarily present these results to show each method is roughly comparable in this setting. Overall, our approach is more coherent then the PROFHiT methodology while reaching the same or better WMAPE results. The  exact projection methods is of course more coherent, although this does not necessarily result in better WMAPE performance. It depends on the specific use-case whether one wants to choose a perfectly coherent method or a soft-coherency method. 



\paragraph{Results on noisy data}

Recall that to construct noisy data, we use the original datasets from Table \ref{tab:datasets} and randomly drop 20\% of bottom-level time series.  We report the average metrics (i.e., MSE, WMAPE, and coherency on test data) at each level of the hierarchy across all 20 noisy datasets for each of the three base datasets from table \ref{tab:datasets}. See Table \ref{tab:traffic-res} for MSE/WMAPE results on the traffic dataset, Figure \ref{fig:tourism-res} for the tourism data, and Table \ref{tab:labor-res} for the labor data. Table \ref{tab:coh-results} contains all coherency results. 

On noisy data, the base model performs best in terms of accuracy overall when there is more data. This is evidenced by the tourism results which has the largest number of time series. However, the base model is unaware of any hierarchical constraints (and must learn this purely from data). Therefore, coherency levels of the base model are quite poor across all datasets. Coherency results for each method across all datasets can be found in Table \ref{tab:coh-results}. These results also corroborate our discussion earlier around Figure \ref{fig:validation-coherency} that not enforcing forecast coherency can lead to worse generalization capabilities which can be better seen for the labor and traffic datasets (Tables \ref{tab:traffic-res} and \ref{tab:labor-res}) which have less data/series.

Overall, our proposed network coherency regularization approach can achieve the best coherency on test while still performing as well or better in terms of accuracy. Especially in this setting when the data is not exactly coherent, there is a larger gap in performance compared to the exact projection method. The PROFHiT method also generally shows improvement over the exact projection approach for the same reason, but does not achieve as good performance as our method.

\begin{table}
    \centering
    \begin{tabular}{c c c c c} \toprule
& Base & \makecell{CoRe (Ours)} & PROFHiT & Projection  \\ \midrule     
    Traffic &  0.45 $\pm$ 0.32 & \textbf{0.01} $\pm$ 0.008 & 0.1 $\pm$ 0.28 & 0 $\pm$ 0 \\
    Labor   & 0.18 $\pm$ 13 & \textbf{0.11} $\pm$ 0.10 & 0.14 $\pm$ 0.08 & 0 $\pm$ 0 \\
    Tourism & 0.07 $\pm$ 0.075 & \textbf{0.035} $\pm$ 0.03 & 0.056 $\pm$ 0.04 & 0 $\pm$ 0 \\  \bottomrule        
    \end{tabular}
    \caption{Average coherency and standard deviation of each method on test data from each noisy dataset.}
\label{tab:coh-results}
\end{table}

\paragraph{Distributional forecasts}
\label{sec:dist-forecast}

We implement both methods proposed in Section \ref{sec:dist} to generate distributional forecasts. In particular, the VAE approach follows the architecture in Figure \ref{fig:vae_arch}. The encoder is the same RNN architecture as in the previous section. We use the final hidden state of the RNN as the latent space embedding. As the decoder step, we use two feed-forward layers. On the other hand, the dropout approach described in section \ref{sec:dist} uses the same architecture, although we do not add any noise to the latent space embedding when performing the forward pass. To inject noise, we instead dropout a random subset of the nodes in the hidden layer of the RNN encoder. In both approaches, the network coherency regularization can be applied to the final linear layers of the networks.  In this distributional setting, we also measure CRPS. To evaluate MSE or WMAPE, we use the mean of the forecasted distribution. Appendix Figure \ref{fig:crps_weight} shows CRPS on validation data as a function of the weight. Similar to results in Figure \ref{fig:validation-coherency-weight} for the point forecast version, we observe the tradeoff between increasing coherency weight and accuracy, CRPS in this case.

\begin{table*}[h]
    \centering
    \resizebox{16cm}{!}{
    \begin{tabular}{c c c c c c c | c c c c c c}
        & \multicolumn{6}{c}{Dropout} & \multicolumn{6}{c}{VAE}\\
        \toprule 
        Dataset & \multicolumn{3}{c}{{CoRe (Ours)}} & \multicolumn{3}{c}{{\makecell{Base}}} & \multicolumn{3}{c}{{CoRe (Ours)}} & \multicolumn{3}{c}{{\makecell{Base}}} \\ 
        & CRPS & WMAPE & Cohr. & CRPS & WMAPE & Cohr. & CRPS & WMAPE & Cohr. & CRPS & WMAPE & Cohr. \\
        \midrule
        Traffic & \makecell{\textbf{0.006} \\ $\pm$ \textbf{0.004}} & \makecell{\textbf{0.07} \\ $\pm$ \textbf{0.08}} & \makecell{\textbf{0.01} \\ $\pm$ 0.013}  & \makecell{0.008 \\ $\pm$ 0.006} & \makecell{0.10 \\ $\pm$ 0.12 } & \makecell{\textbf{0.01} \\ $\pm$ 0.005} & \makecell{\textbf{0.01} \\ $\pm$ \textbf{0.007} } & \makecell{\textbf{0.16} \\ $\pm$ {0.24}} & \makecell{\textbf{0.008} \\ $\pm$ \textbf{0.009}} & \makecell{0.025 \\ $\pm$ 0.20} & \makecell{0.22 \\ $\pm$ \textbf{0.23}} & \makecell{0.017 \\ $\pm$ 0.011}\\ 
        Labor & \makecell{ 0.068 \\ $\pm$ \textbf{0.092} } & \makecell{\textbf{0.28} \\ $\pm$ \textbf{0.03}} & \makecell{\textbf{0.005} \\ $\pm$ \textbf{0.009} } & \makecell{ \textbf{0.065} \\ $\pm$ 0.01 } & \makecell{ 0.32 \\ $\pm$ 0.11} & \makecell{ 0.03 \\ $\pm$ 0.02 } & \makecell{\textbf{0.017} \\ $\pm$ \textbf{0.03}} & \makecell{\textbf{0.43} \\ $\pm$ \textbf{0.20}} & \makecell{\textbf{0.0016} \\ $\pm$ \textbf{0.0022}} & \makecell{0.020 \\ $\pm$ 0.032} & \makecell{0.73 \\ $\pm$ 0.63} & \makecell{0.031 \\ $\pm$ 0.026} \\ 
        Tourism & \makecell{\textbf{0.014} \\ $\pm$ \textbf{0.025}} & \makecell{\textbf{0.38} \\ $\pm$ \textbf{0.17}} & \makecell{\textbf{0.004} \\ $\pm$ \textbf{0.003}} & \makecell{0.017 \\ $\pm$ 0.029} & \makecell{0.64 \\ $\pm$ 0.51} & \makecell{0.022 \\ $\pm$ 0.020} & \makecell{\textbf{0.01} \\ $\pm$ \textbf{0.017}} & \makecell{\textbf{0.33} \\ $\pm$ \textbf{0.20}} & \makecell{\textbf{0.005} \\ $\pm$ \textbf{0.003}} & \makecell{0.01 \\ $\pm$ 0.016} & \makecell{0.41 \\ $\pm$ 0.31} & \makecell{ 0.01 \\ $\pm$ 0.013 } \\ \bottomrule
    \end{tabular}
    }
    \caption{Metrics for distributional forecasts for the base model without coherency, and our proposed regularization approach. }
    \label{tab:dist-no-noise}
\end{table*}

\begin{figure*}
    \centering \includegraphics[width=0.9\linewidth]{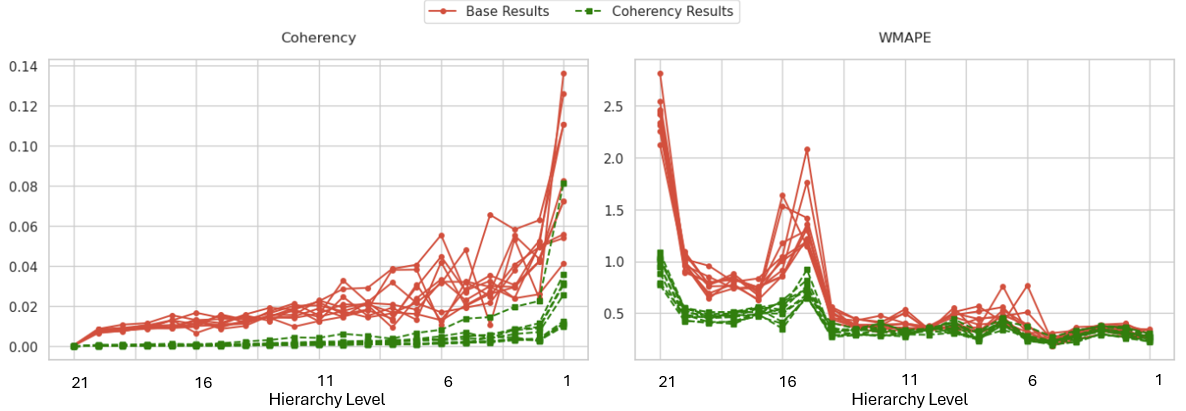}
    \caption{Coherency reduces variance. Results on tourism dataset using dropout method for distributional forecasts. Level 1 corresponds to the top level (aggregating all series), while level 21 corresponds to the bottom level series.}
    \label{fig:variance}
\end{figure*}

\paragraph{Coherency reduces variance in training.} 

We consider training the same model architecture both with our network coherency regularization and without. Figure \ref{fig:variance} shows results for the tourism dataset on validation data for both approaches across 10 different training runs. We observe not only improved performance both in terms of coherency, CRPS, and WMAPE, but also a reduction in variance across each run when training with our proposed regularization. Table \ref{tab:dist-no-noise} shows aggregate results for all three datasets. 

\paragraph{Comparison with PROFHiT} We observe that PROFHiT has difficulty in improving both accuracy (lowering CRPS, WMAPE, MSE) while simultaneously improving coherency in the distributional forecast setting. See for instance table \ref{tab:profhit-dist} showing three models with different training weight between MSE and their proposed coherency loss. We expect this to be due to a misalignment in the structure of distributions forecasted by PROFHiT and the ground truth data distribution. PROFHiT produces a Gaussian distributional forecast for each individual series. However, creating one sample for each series will result in point forecasts which are not coherent and there is no explicit control over how coherent the final sample will be. The loss function proposed in \cite{kamarthi2022profhit} appears to improve coherency by (1) ensuring the mean of the distributional forecasts for all series is coherent and (2) reduces variance for each individual Gaussian distribution forecast. This is evidenced in Figure \ref{fig:profhit-var} in the appendix, showing that higher coherency leads to distributional forecasts will lower variance. However, this is not the true distribution of the data. While a single series may exhibit a Gaussian distribution, each series is not independent from the rest, causing the discrepancy with respect to the PROFHiT model. To contrast, a method which generates samples from the forecasted distribution can ensure individual samples are coherent. We see from Table \ref{tab:dist-no-noise} that such approaches can perform up to twice as well in terms of CRPS, while maintaining nearly the same coherency.

\begin{table}[t]
    \centering
    \begin{tabular}{l c c}
        \toprule
        & CRPS & Coherency Loss \\ \midrule
        low coherency & 0.031 & 0.032 \\
        medium coherency & 0.050 & 0.013 \\ 
        high coherency & 0.062 & 0.001 \\ \bottomrule
    \end{tabular}
    \caption{PROFHiT distributional forecasts on tourism data. }
    \label{tab:profhit-dist}
\end{table}

\noindent \paragraph{\textbf{Conclusion}} We introduce CoRe, a coherency regularization method to impose soft coherency on hierarchical time series problems. Our approach is generalizable to forecasting distributional forecasts where each sample is (softly) coherent. We prove our approach ensures soft coherency on any output it generates, even out-of-sample and out-of-distribution. We empirically show strong performance for a variety of datasets and network architectures.

\newpage 
\section{Citations and Bibliographies}

\bibliographystyle{ACM-Reference-Format}
\bibliography{sample-base}

\appendix

\newpage 
\section{Experiment details}

\paragraph{Data processing}

We apply standard preprocessing to each dataset by converting each value between 0 and 1. We do so by dividing each entry by the largest value observed in the entire data. Note that we scale each datapoint by the same value (not, as is common, by the maximum value in the same column/feature). We do so in order to keep the hierarchical nature of the data. 

Since we work with time series data, we augment each datapoint with lag features as well. That is, a single datapoint for time point $t$ consists of all series in the hierarchy at times $t, t-1, \dots, t-k+1$. For all experiments, we chose a window of $k=5$.  

\paragraph{Tuning}

For each dataset, we split into train, validation and test. Train consists of the first 80\% of the data (where data is ordered by time), validation consists of the next 10\% and test the final 10\%.  We train each method until we observe no improvement in validation score (of MSE) for at least 100 epochs. We then choose the checkpoint during training which had the best performance on validation data. Finally, we evaluate each model on the test data. 

The only hyperparameters we need to tune are the weights in the loss function between MSE and the coherency term (either our proposed regularization term or the proposed loss from \cite{kamarthi2022profhit}). Otherwise, all other parameters are kept constant --- learning rate of 0.001, batch size equal to the entire training set.

\paragraph{Model architecture}
\label{app:model}

We use the same base network for each experiment: a recurrent neural network (RNN) where we use a hidden layer width of 128. In particular, the model looks as follows. We are given an input sequence with 5 time points $y^1, \dots, y^5$ where each vector $y^i$ consists of all series at each level of the hierarchy. The RNN transforms each time step by an input layer, mapping it to dimension 128. The model maintains a hidden state (also of dimension 128) that is updated at each time step by combining the current input with the previous hidden state. This combination is passed through a hidden layer with a tanh activation function. After processing the entire sequence, the final hidden state undergoes batch normalization. The last layer then transforms this normalized hidden state to produce the output.


\newpage 
\section{Additional Figures}

\begin{figure}[H]
    \centering
    \includegraphics[width=0.95\linewidth]{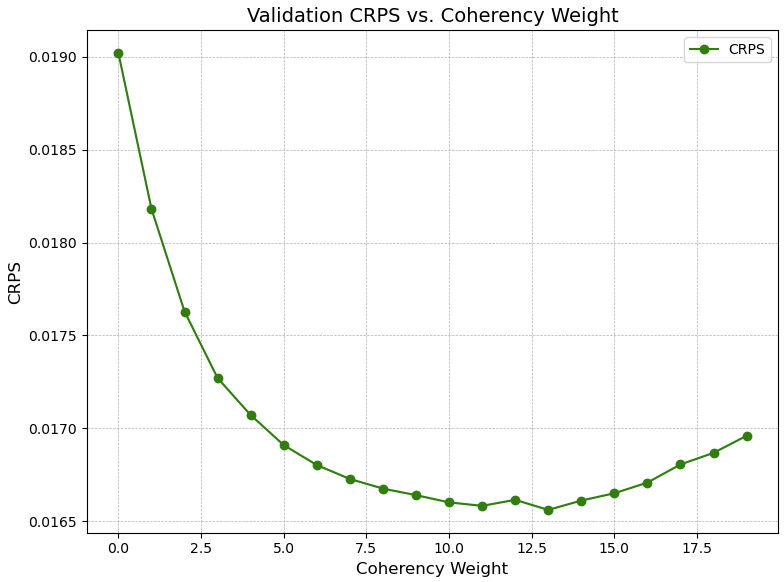}
    \caption{CRPS score on validation data for CoRe model on labor data as a function of regularization weight.}
    \label{fig:crps_weight}
\end{figure}

\begin{figure}[H]
    \centering
    \includegraphics[width=0.95\linewidth]{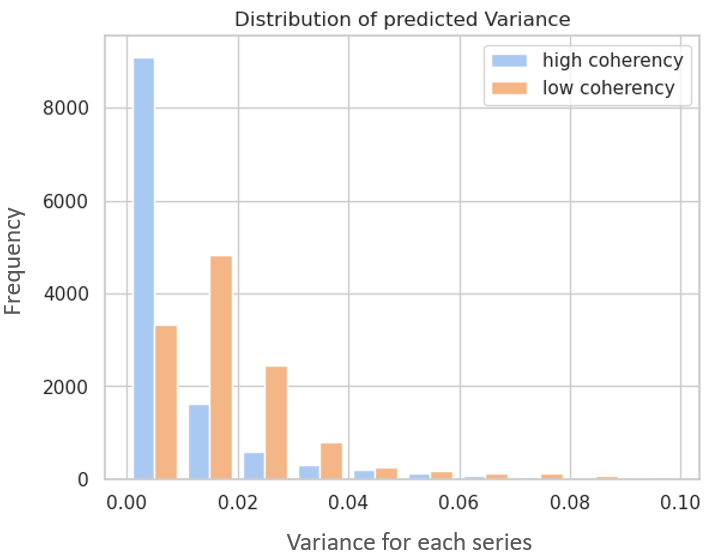}
    \caption{Frequence of variance of predicted gaussian distributions made by PROFHiT method on tourism data. For a model with higher coherency (as enforced by coherency loss introduced in \cite{kamarthi2022profhit}), variance is lower to allow for better coherence.}
    \label{fig:profhit-var}
\end{figure}


\end{document}